\newtheorem{corollary}{Corollary}
\newtheorem{definition}{Definition}
\newtheorem{lemma}{Lemma}
\newtheorem{proposition}{Proposition}
\newtheorem{proposition2}{Proposition}
\DeclareMathOperator*{\argmin}{argmin}
\title{Calibrating for Class Weights\\by Modeling Machine Learning}
\author{%
  Andrew Caplin \\
  Department of Economics\\
  New York University \\
  \texttt{andrew.caplin@nyu.edu} \\
  \And
  Daniel Martin \\
  Kellogg School of Management \\
  Northwestern University \\
  \texttt{d-martin@kellogg.northwestern.edu} \\
  \AND
  Philip Marx \\
  Department of Economics \\
  Louisiana State University \\
  \texttt{philiplmarx@gmail.com} \\
}
\begin{document}

\maketitle

\begin{abstract}
A much studied issue is the extent to which the confidence scores provided by machine learning algorithms are calibrated to ground truth probabilities. Our starting point is that calibration is seemingly incompatible with class weighting, a technique often employed when one class is less common (class imbalance) or with the hope of achieving some external objective (cost-sensitive learning). We provide a model-based explanation for this incompatibility and use our anthropomorphic model to generate a simple method of recovering likelihoods from an algorithm that is miscalibrated due to class weighting. We validate this approach in the binary pneumonia detection task of \textcite{rajpurkar2017chexnet}.
\end{abstract}

\section{Introduction}

An important set of machine learning applications involve classification. In a classification task, the goal is to correctly identify a categorical label $y \in Y = \{0,\dots,n-1\}$ given an instance/observation $x$. For example, the label $y$ may be a medical condition, the instance $x$ a medical image, and the prediction a recommended clinical diagnosis. In addition to this prediction, machine learning classifiers typically also provide a vector of confidence scores $a = (a_0, ..., a_{n-1}) \in A \subseteq \mathbb{R}^n$ that can be used to assess ``confidence'' in their predictions.

A much studied issue is the extent to which these confidence scores are \emph{calibrated} to ground truth  probabilities. When scores are calibrated, they represent the likelihood of each label, for instance, the probability a medical image indicates a medical condition. Calibration has clear advantages for downstream purposes, as evidenced by the large literature devoted to its study (\cite{platt1999calibration}, \cite{zadroznyelkan2001}, \cite{zadroznyelkan2002}, \cite{guo2017calibration}, \cite{minderer2021calibration}). 
For example, if a score indicates high uncertainty about the true label, then alternative likelihoods, such as those produced by other programs or human agents, can be used instead (\cite{jiang2012medcal}, \cite{raghu2019algorithmic}, \cite{kompa2021medcal}). Also, calibrated confidence scores are valuable for human interpretability (\cite{cosmides1996humanstat}), which can improve decision making and trust in algorithmic predictions.

Our starting point is that calibration is seemingly incompatible with class weighting, a technique often employed when one class is less common (\cite{thai2010cost}) or with the hope of achieving some external objective (\cite{zadrozny2003cost}). We provide a model-based explanation for this incompatibility and use our model to generate a simple method of recovering likelihoods from an algorithm that is miscalibrated due to class weighting. 

In the anthropomorphic model that underpins our approach, we conceive of an algorithm as a decision-maker that receives informative signals, forms posterior beliefs, and reports confidence scores that minimize its loss function given its beliefs. We offer a simple test of this model based on the notion of \textit{loss-calibration}, which indicates how a loss function can incentivize miscalibration. If an algorithm passes this test, then a simple analytic correction can be used to generate \textit{loss-corrected confidence scores} that are calibrated. When applicable, such analytical corrections are arguably preferable to general rescaling methods such as \textcite{platt1999calibration} and \textcite{zadroznyelkan2001} because they are less prone to misspecification and do not require further estimation, improving efficiency. Also, our model-based approach provides an intuitive logic for extending analytical recalibration approaches to multiclass problems (Proposition \ref{thm:ap-multi}).

We validate our approach in the binary pneumonia detection task of \textcite{rajpurkar2017chexnet} using CheXNeXt, a state-of-the-art deep convolutional neural network for predicting thoracic diseases from chest X-ray images (\cite{rajpurkar2018chexnext}). We chose this task both because it represents a setting of societal importance and because it is a setting where class imbalance is endemic, as most chest X-rays do not indicate the presence of a thoracic disease such as pneumonia.

\section{Calibration and Class Weights}

\begin{figure}
    \centering
    \includegraphics[scale=0.3]{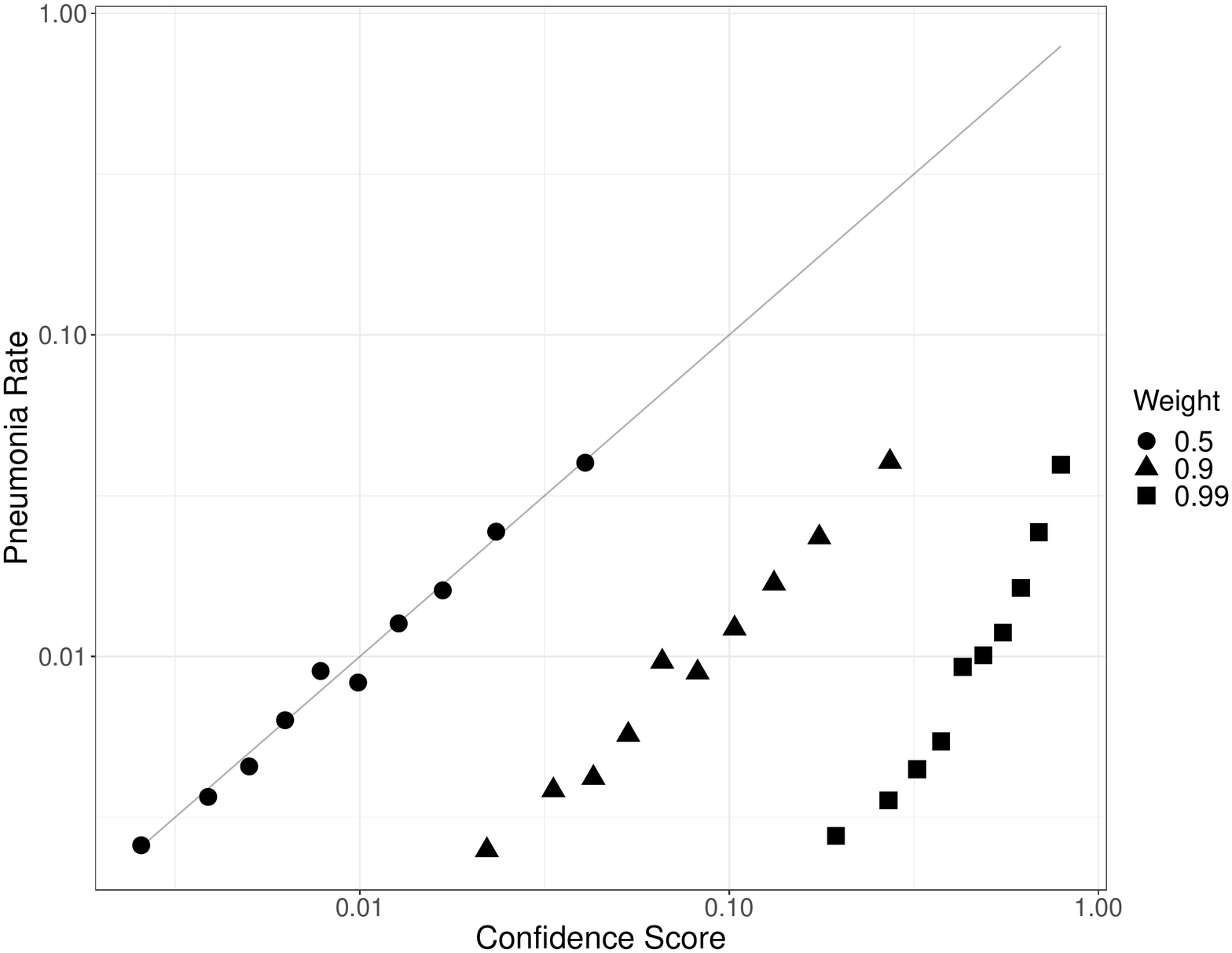}
    \caption{
    \label{fig:miscalibrated-data}
    Calibration curves for the pneumonia detection task of \textcite{rajpurkar2017chexnet} for ChestX-ray14 data (\cite{wang2017chestxray}) with varying class weights.
    }
\end{figure}

Formally, confidence scores are calibrated if, for each potential label and observed score, the score equals the probability of the label when that score is provided:

\begin{definition}[Calibration]
\label{def:calibration}
Confidence scores are \textbf{\textbf{calibrated}} if: 
\begin{equation}
\label{eq:calibration}
a_y = P(y | a_y)
\quad \text{for all $a_y$ such that $P(a_y)>0$.}
\end{equation}
\end{definition}

\noindent

We use the binary pneumonia detection task of \textcite{rajpurkar2017chexnet} to illustrate the incompatibility of calibration with class weighting. A standard log loss function for binary classification would specify losses from the positive confidence score $a_1$ conditional on a positive label (outcome $y=1$) as $- \log (a_1)$ and those on a negative label (outcome $y=0$) as $- \log (1-a_1)$. Class weighting allows for differential losses for different types of error, producing losses of $-\beta_1 y \log (a_1)-(1-\beta_1) (1-y) \log (1-a_1)$ where $\beta_1 \in (0,1)$ denotes the relative weight on the positive class. For a rare yet important event such as pneumonia, one upweights the relative importance of false negatives: the inverse class weight used in \textcite{rajpurkar2017chexnet} was approximately 0.99.

To experimentally evaluate the impact of class weighting and our procedures, we trained models using the ChestX-ray14 dataset, which consists of 112,120 frontal chest X-rays which were synthetically labeled with the presence of fourteen thoracic diseases (\cite{wang2017chestxray}).
Our main modifications are that we isolate the task of pneumonia detection as in the earlier implementation of \cite{rajpurkar2017chexnet}, and that we train the algorithm across various $\beta$-weighted cross-entropy loss functions. In addition, we employ ensemble (model-averaging) methods to isolate the substantive effects of what the machine learns from random noise inherent to the stochastic training procedure.
Using nested cross-validation methods, this yields an ensemble model prediction at each $\beta$ for each of the 112,120 X-ray images in the original data.
Further technical details of our training procedure are relegated to Appendix \ref{apx:technical}.

In Figure \ref{fig:miscalibrated-data} we apply class weights $\beta_1=0.5,0.9,0.99$ and plot decile-binned calibration curves (\cite{degroot1983comparison,niculescu2005predicting}). The horizontal axis represents the (logged) confidence score, and the vertical axis the corresponding (logged) pneumonia rate in the data. The axes are log-scaled to separate spacing between points and to ease visualization of the relationships, without altering the relationships themselves. Confidence scores are calibrated if the calibration curve coincides with the 45-degree line (the solid diagonal line). Clearly, this holds only when $\beta_1=0.5$, and as the asymmetry in class weights increases, the confidence scores become increasingly miscalibrated. 

\begin{figure}
    \centering
\includegraphics[scale=0.3]{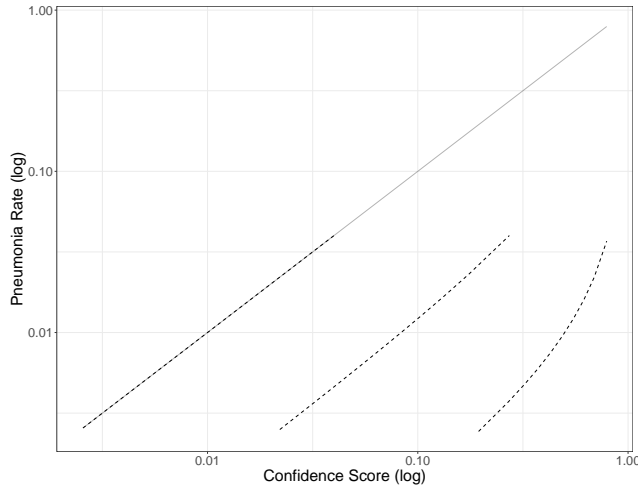}
    \caption{
    \label{fig:miscalibrated-theory}
    Theoretical calibration curves for loss-calibrated algorithm with varying class weights.
    }
\end{figure}

To understand this systematic failure of calibration it is helpful to think about the implications of successfully minimizing losses. If an algorithm successfully minimizes losses, it must not be possible to lower losses by taking all of the instances in which the confidence score $a$ was provided and using alternative confidence score $a'$ instead. It is this property that defines an algorithm as loss-calibrated.

To formalize this property, we denote $L(a,y)$ as the losses for score $a$ when the label is $y$ and $P^L(a,y)$ as the joint probability of scores and labels when the loss function is $L$.\footnote{For simplicity, we assume throughout that this probability distribution is discrete. In practice, distributions are discrete because datasets are finite. Further, scores are typically binned.}
\begin{definition}[Loss-Calibration]
Confidence scores are \textbf{loss-calibrated} to loss function $L$ if a wholesale switching of scores does not reduce losses according to $L$:
\label{def:loss-calibration}
\begin{equation}
\label{eq:loss-calibration}
    \sum_{y \in Y} P^L (a,y) L(a,y) \geq \sum_{y \in Y} P^L (a,y) L(a',y) \text{ for all } a'\in A
\end{equation}
\end{definition}

One value of introducing this construct is that if an algorithm is loss-calibrated there is a straightforward theoretical prediction for the miscalibration induced by a particular loss function. By way of illustration, Figure \ref{fig:miscalibrated-theory} shows this theoretical relationship between scores and pneumonia rates for relative positive class weights $\beta_1=0.5,0.9,0.99$. The precise form of this function is derived in a more general setting in Proposition \ref{thm:ap-bin} in Section \ref{sec:iml}.

Figure \ref{fig:miscalibrated-all} superimposes Figure \ref{fig:miscalibrated-data} and Figure \ref{fig:miscalibrated-theory} combining the actual and theoretical relationships between confidence scores and pneumonia rates. This figure strongly suggests that while the algorithm becomes increasingly miscalibrated with higher weights, it is always very close to being loss-calibrated.

\begin{figure}
    \centering
    \includegraphics[scale=0.3]{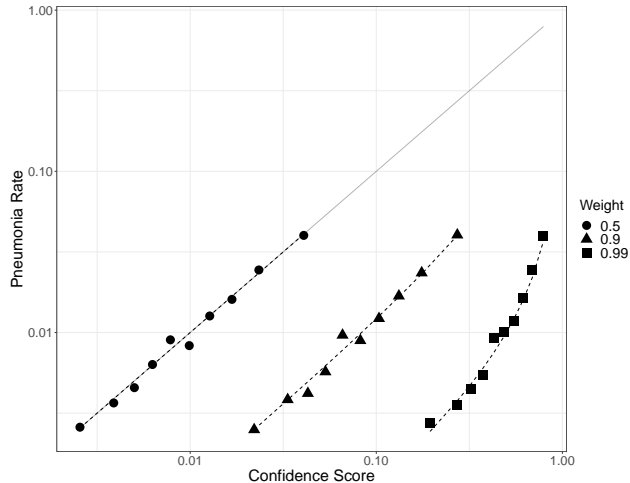}
    \caption{
    \label{fig:miscalibrated-all}
    Relationship between confidence score and pneumonia rate for a loss-calibrated algorithm with varying class weights.}
\end{figure}

Our model-based approach to the incompatibility of calibration and class weighting provides a simple model-based rectification. In typical cases, our proposed solution reduces to a simple analytic correction, the \emph{loss-corrected confidence score}. In this case of weighted binary classification with weight $\beta_1$, the loss-corrected confidence score is:
\begin{equation}
\label{eq:transformation}
g^{\beta_1} (a_1) = \frac{(1-\beta_1)a_1 }{\beta_1 + (1-2\beta_1) a_1}
\end{equation}
Figure \ref{fig:calibrated} plots the loss-corrected confidence scores and indicates that this analytic correction successfully induces calibration. The general mapping is characterized formally in Section \ref{sec:iml}. We also confirm that Figure \ref{fig:miscalibrated-all} and Figure \ref{fig:calibrated} are intimately linked: confidence scores are loss-calibrated if and only if loss-corrected confidence scores are calibrated (Proposition \ref{thm:connect}).

\begin{figure}
    \centering
    \includegraphics[scale=0.3]{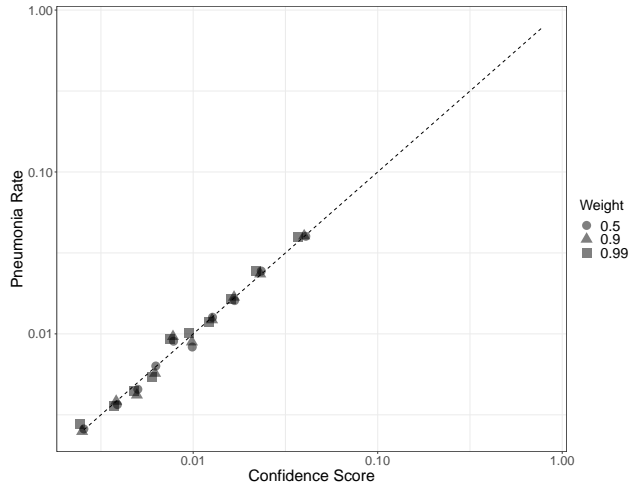}
    \caption{
    \label{fig:calibrated}
    Loss-corrected confidence scores are calibrated.
    Each point plots the average loss-corrected confidence score (log) against the pneumonia rate (log) for a decile of confidence scores}
\end{figure}

\section{Modeling Machine Learning}
\label{sec:iml}

An algorithm converts a set of inputs (training data, training procedure, loss function, etc.) into a scoring rule that is evaluated on a set of labeled observations. We abstract from the complexity of this system by introducing a formal model of an algorithm based on information-theoretic principles. Our approach treats an algorithm as if it follows the \textcite{blackwell1953equivalent} model of experimentation, signal processing, and choice. Technically the algorithm is modeled as an optimizing agent that i) gets signals that provide information about labels, (ii) forms posterior beliefs about labels via Bayesian updating, and iii) converts these posteriors into reported scores. Each stage may be modulated by incentives, in particular those inherent in the scoring loss function $L$. 

Technically, prior beliefs are specified by unconditional probabilities $\pi (y)$ over labels $y \in Y$. Together with a finite set of possible signals $S$ and the conditional probabilities of each signal with each label $\pi (s|y)$ this defines a statistical experiment in the sense of \textcite{blackwell1953equivalent}. Upon observing a signal $s$, the model posits that the algorithm forms posterior beliefs $\gamma^s$ about label $y$ using  Bayes' rule:
\begin{equation}
    \label{eq:posterior}
    \gamma_y^s = \pi (y | s) = \frac{\pi (s | y) }{\pi(s)} \pi (y).
\end{equation}
In the final stage, posterior beliefs $\gamma$ are translated into scores in accordance with a (possibly set-valued) loss-minimizing scoring function:
\begin{equation}
\label{eq:choice}
    c^L (\gamma) = \argmin_{a' \in A} \, \sum_y \gamma_y L(a', y),
\end{equation}
which is well-defined for all posteriors $\gamma$ over the set of labels $Y$. 

\subsection{Loss Calibration}

We begin by providing a tight connection between loss-calibration and consistency with our theoretical model. This formalizes when a machine's behavior can be interpreted as if an algorithm performs a statistical experiment, observes signals, updates in a Bayesian manner, and then optimally scores the resulting posteriors. In turn, separating what an algorithm  learned from how it scored that information forms the basis of our subsequent approach to recovering calibration.

\begin{definition}
\label{def:rep}
For a given loss function $L$, $P^L$ has a \textbf{signal-based representation} (SBR) if there exists a finite signal set $S$, a statistical experiment $\pi$, and a scoring function $\alpha:S\rightarrow A$ such that:  
\begin{enumerate}
    \item Prior beliefs are correct: $\pi (y) = P^L(y)$.
    \item Posterior beliefs satisfy Bayes' rule: $\gamma_y^s = \pi (y | s)$.
    \item Scores are loss-minimizing given posterior beliefs: $\alpha^s \in c^L (\gamma^s)$. \label{item:lm}
    \item Scores are generated by the model: $P^L(a, y) = \sum_{s: \alpha^s = a} \pi (s,y) $.
\end{enumerate}
\end{definition}

\noindent

If $P^L$ has an SBR, then it is \emph{as if} the algorithm optimizes scores given the Bayesian posterior beliefs induced by its statistical experiment. The following result formalizes the equivalence between loss calibration and the existence of such an interpretation.%
\footnote{
Formally, the model and result form a special case of \textcite{caplin2015testable} in which the utility (in our case loss) function is known.}
For this result, we only need apply one minor condition to $P^L$, which is that the distribution of states is not identical for any two scores: $P^L(a,y)\neq P^L(a',y)$ for some $y\in Y$ if $a\neq a'$.
Proofs of all formal results are collected in Appendix \ref{apx:proofs}.

\begin{proposition}
\label{thm:sbr}
    Confidence scores are loss-calibrated to $L$ if and only if $P^L$ has an SBR.
\end{proposition}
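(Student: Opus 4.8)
The plan is to reduce both directions to a single reformulation of loss-calibration (Definition~\ref{def:loss-calibration}): it says precisely that every observed score $a$ minimizes the conditional expected loss against the posterior it induces. Since $P^L(a,y)=P^L(a)\,P^L(y\mid a)$, dividing~\eqref{eq:loss-calibration} by $P^L(a)>0$ recasts the no-profitable-switch condition as
\[
a\in c^L\!\big(P^L(\cdot\mid a)\big)\qquad\text{for every }a\text{ with }P^L(a)>0,
\]
where $c^L$ is the loss-minimizing scoring correspondence of~\eqref{eq:choice}. The whole argument then consists of matching this reformulation to item~\ref{item:lm} of the SBR.

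For the direction ``SBR implies loss-calibrated,'' I would suppose $P^L$ has an SBR with experiment $\pi$ and scoring rule $\alpha$. Condition~4 gives $P^L(a,y)=\sum_{s:\alpha^s=a}\pi(s)\gamma_y^s$, so the posterior induced by a score is the mixture
\[
P^L(y\mid a)=\sum_{s:\alpha^s=a}\frac{\pi(s)}{P^L(a)}\,\gamma_y^s
\]
of exactly those signal posteriors that map to $a$, with nonnegative weights summing to one. By condition~3 each such $\gamma^s$ satisfies $a\in c^L(\gamma^s)$, and because the expected loss $\gamma\mapsto\sum_y\gamma_y L(a',y)$ is \emph{linear} in the belief $\gamma$, the set $\{\gamma:a\in c^L(\gamma)\}$ is an intersection of half-spaces and hence convex. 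Therefore $a$ remains a loss-minimizer against the convex combination $P^L(\cdot\mid a)$, which is the reformulated loss-calibration condition.

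For the converse, ``loss-calibrated implies SBR,'' I would exhibit the canonical representation in which each score serves as its own signal. Take $S$ to be the set of scores with $P^L(a)>0$, set $\pi(y)=P^L(y)$ and $\pi(s,y)=P^L(a,y)$ under the identification $s\leftrightarrow a$, and let $\alpha^s=a$. Conditions~1, 2, and~4 then hold by construction: the prior marginalizes correctly, $\gamma_y^s=\pi(y\mid s)=P^L(y\mid a)$ is Bayes' rule, and the bijection between signals and scores yields $\sum_{s:\alpha^s=a}\pi(s,y)=P^L(a,y)$. Condition~3, namely $\alpha^s=a\in c^L\!\big(P^L(\cdot\mid a)\big)$, is exactly the loss-calibration hypothesis in its reformulated form.

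The main obstacle is conceptual rather than computational: the backward direction hinges on linearity of expected loss in beliefs, which is what turns ``$a$ is optimal'' into a convex constraint and therefore makes optimality stable under the aggregation of several signals into a single score. The minor condition that distinct scores induce distinct state distributions enters neither inequality but guarantees the representation is well-posed---distinct signals carry genuinely distinct posteriors, so $\alpha$ is a bona fide function and the canonical experiment contains no redundant signals. I would close by observing that this is the special case of the \textcite{caplin2015testable} characterization in which the loss function $L$ is known, so no separate identification of $L$ is needed.
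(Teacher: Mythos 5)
Your proof is correct, and one half of it coincides with the paper's: for the direction from loss calibration to an SBR you construct the same canonical representation (each observed score serves as its own signal, $\pi(a,y)=P^L(a,y)$, $\alpha$ the identity), and your reformulation obtained by dividing \eqref{eq:loss-calibration} by $P^L(a)>0$ is exactly the paper's Lemma~\ref{thm:lc-lemma}. Where you genuinely diverge is the converse direction, from SBR to loss calibration. The paper argues contrapositively: assuming some observed $a$ violates \eqref{eq:loss-calibration}, it asserts that in any candidate experiment satisfying conditions 1, 2, and 4, \emph{every} signal $s$ with $\alpha^s=a$ must have $\alpha^s \notin c^L(\gamma^s)$. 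That claim is actually stronger than what follows: if $a$ is suboptimal against the mixture $P^L(\cdot \mid a)$, linearity of expected loss only forces suboptimality against \emph{some} component posterior $\gamma^s$ (which already suffices to contradict condition~\ref{item:lm} of Definition~\ref{def:rep}). Your direct argument supplies precisely the justification the paper leaves implicit: expected loss is linear in the belief, so $\{\gamma : a \in c^L(\gamma)\}$ is an intersection of half-spaces and hence convex, and conditions 2 and 4 exhibit $P^L(\cdot \mid a)$ as a convex combination of the pooled posteriors $\gamma^s$, so optimality at each component passes to the mixture. Your route thus buys an airtight and more transparent treatment of the aggregation step, at the cost of a slightly longer argument, while the paper's contrapositive is more compact but glosses over (and, as worded, slightly misstates) this step. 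Your side remark on the minor distinctness condition is also consistent with the paper: its proof never invokes that condition, which only serves to rule out redundant signals in the canonical construction.
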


\noindent
If confidence scores are loss-calibrated, then a simple SBR is the one corresponding to the observed scores: $S=A$, $\pi(a,y) = P^L(a,y)$, and $\alpha^a = a$. If scores are not loss-calibrated, it is always possible to strictly reduce losses through a wholesale relabeling procedure of some realized score violating \eqref{eq:loss-calibration}, so that no signal-based representation exists  because condition \ref{item:lm} of Definition \ref{def:rep} is violated. 

This representation assumes that machines behave as Bayesian decision makers who report scores in a way that minimizes losses given their beliefs. In the case of proper loss functions --- namely, those that incentivize truthful
revelation of beliefs --- this requires that the machine's confidence scores be calibrated. Early and simple neural networks were shown to have good calibration properties (\cite{niculescu2005predicting}), and there are at least two important classes of modern deep learning algorithms where this also appears to be the case.

First, \textcite{minderer2021calibration} conduct a comprehensive comparison of 180 image classification models and find that the most accurate current models, such as non-convolutional MLP-Mixers (\cite{tolstikhin2021mlp}) and Vision Transformers (\cite{dosovitskiy2021vt}), are not only well-calibrated compared to earlier models, but also that their calibration is more robust to distributions that differ from training.
This body of evidence leads the authors to conjecture that future improvements in model accuracy will only benefit calibration. 
Thus, our SBR representation is likely to remain a useful foundation for modeling machines even as (and because) they become increasingly complex. 

Second, it has been shown that calibration of deep neural nets is improved with regularization procedures such as weight decay (\cite{guo2017calibration}) as well as model ensembling (\cite{lakshmi2017ensemble}). This is consistent with the intuition that both of these procedures reduce overconfidence. As a result, our SBR representation may also be applicable for deep convolutional neural networks that employ standard regularization procedures. 

\subsection{Incentivizing Miscalibration}

To characterize incentive effects on confidence scores we study the optimal posterior scoring choice rule \eqref{eq:choice} as we add class weights to a baseline binary loss function $L$ that is differentiable and \textit{strictly proper}. Such a loss function induces a uniquely optimal action, which is to truthfully score any posterior. By \emph{truthful} we mean that $c^L (\gamma) = \gamma$. 
We begin with the setting of binary classification and adopt the characterization introduced in machine learning by \textcite{buja2005loss} and with origins in psychometrics (\cite{shuford1966admissible}). Being strictly proper corresponds to the loss function being incentive-compatible for belief elicitation (see \cite{schotter2014belief} for a review).

\begin{definition}
\label{def:proper}
    A loss function $L (a_1,y)$ defined on $(0,1) \times \{0,1\}$
    is \textbf{differentiable and strictly proper} if it satisfies 
    differentiability in $a_1$ for each $y \in \{0,1\}$ with derivatives satisfying: 
    \begin{equation}
        \label{eq:proper}        
        \frac{\partial L(a_1,1)}{\partial a_1} = w(a_1) (a_1-1), \quad
        \frac{\partial L(a_1,0)}{\partial a_1} = w(a_1) a_1
    \end{equation}
    for some positive weight function $w(a_1) > 0$ on $(0,1)$, and with $\int_{\varepsilon}^{1-\varepsilon} w (a_1) da_1 < \infty$ for all $\epsilon > 0$.
\end{definition}
\noindent
This class includes standard loss functions such as squared error
and cross-entropy. 
Proper weighting functions are of interest because they incentivize unbiased scoring for symmetrically weighted outcomes.
Thus, restricting to proper loss functions, calibration (Definition \ref{def:calibration})
and loss calibration (Definition \ref{def:loss-calibration}) are equivalent.  

An algorithm that has a signal-based representation would generate calibrated scores if its posterior beliefs were correct on average and truthfully scored $c^L (\gamma) = \gamma$. 
However, the situation for a general loss function is quite different, given that the incentives in mapping posteriors to confidence scores vary with the loss function. 

For loss functions that are not strictly proper, the machine may be incentivized to predict something other than its posterior beliefs.
Consider weighted loss functions that are defined by beginning with a differentiable and strictly proper binary loss function $L$ and reweighting the positive class $\beta_1 \in [0,1]$:
\begin{equation}
    \label{eq:b-loss-bin}
    L^{\beta_1} (a_1,y) = \beta_1 y L(a_1, y) + (1-\beta_1) (1-y) L(a_1,y).
\end{equation}
The following proposition (established in Appendix \ref{apx:proofs}) provides the optimal scoring rule for this weighted loss function.
For simplicity of notation and since the function $L$ is fixed throughout, we suppress it from the superscript on the choice function, so that $c^{\beta_1} (\gamma_1)\equiv c^{L^{\beta_1}} (\gamma_1)$.

\begin{proposition}[Optimal Posterior Scores, Binary Class]
\label{thm:ap-bin}
    Suppose $L$ is a differentiable and strictly proper binary loss function. 
    For all $\gamma_1, \beta_1 \in (0,1)$, the optimal scoring rule for weighted loss $L^{\beta_1}$ defined by \eqref{eq:b-loss-bin} is:
    \begin{equation}
        \label{eq:choice-b1}
        c^{\beta_1} (\gamma_1)=\frac{\beta_1 \gamma_1}{1-\beta_1-\gamma_1+2\beta_1 \gamma_1} .
    \end{equation}
\end{proposition}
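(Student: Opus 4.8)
The plan is to treat the optimal scoring rule as the solution to the unconstrained minimization problem \eqref{eq:choice} and to pin down the minimizer via a first-order condition, exploiting the explicit derivative structure that strict properness supplies through \eqref{eq:proper}. First I would substitute the class weights into the definition \eqref{eq:b-loss-bin}: evaluating the factors $y$ and $(1-y)$ at each label gives $L^{\beta_1}(a_1,1) = \beta_1 L(a_1,1)$ and $L^{\beta_1}(a_1,0) = (1-\beta_1) L(a_1,0)$. Hence the optimal score $c^{\beta_1}(\gamma_1)$ minimizes over $a_1 \in (0,1)$ the expected weighted loss
\[
V(a_1) := \gamma_1 \beta_1 L(a_1,1) + (1-\gamma_1)(1-\beta_1) L(a_1,0).
\]

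Next I would differentiate $V$ in $a_1$ and invoke \eqref{eq:proper}, which expresses each partial derivative as $w(a_1)$ times an affine function of $a_1$. This yields
\[
V'(a_1) = w(a_1)\,\bigl[\gamma_1 \beta_1 (a_1 - 1) + (1-\gamma_1)(1-\beta_1) a_1\bigr].
\]
Because $w(a_1) > 0$ throughout $(0,1)$, the sign of $V'$ is governed entirely by the bracketed affine term. Setting that term to zero and solving the resulting linear equation gives
\[
a_1 = \frac{\gamma_1 \beta_1}{\gamma_1 \beta_1 + (1-\gamma_1)(1-\beta_1)},
\]
and expanding the denominator as $1 - \beta_1 - \gamma_1 + 2\beta_1 \gamma_1$ delivers the claimed expression \eqref{eq:choice-b1}.

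What remains, and where the only genuine care is needed, is to confirm that this stationary point is the global minimizer over the open interval rather than a maximizer, a saddle, or a boundary infimum. I would argue directly from the sign structure: the coefficient of $a_1$ in the bracket equals $\gamma_1 \beta_1 + (1-\gamma_1)(1-\beta_1)$, which is strictly positive for $\gamma_1, \beta_1 \in (0,1)$, so the bracket is strictly increasing and $V'$ crosses from negative to positive exactly once. Thus $V$ is strictly decreasing and then strictly increasing, and its unique interior stationary point is the global minimum. The same positivity shows the denominator exceeds the positive numerator, so the minimizer lies strictly inside $(0,1)$ and no appeal to boundary behavior is required. I would emphasize that because $w$ cancels, the formula holds uniformly across every differentiable strictly proper $L$, which is the substantive content: the induced miscalibration depends only on the class weight $\beta_1$, not on the particular proper loss used for training.
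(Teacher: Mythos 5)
Your proof is correct and follows essentially the same route as the paper's: substitute the weights into the objective, differentiate, invoke the strictly proper derivative structure \eqref{eq:proper}, and solve the first-order condition $w(a_1)\bigl[\gamma_1\beta_1(a_1-1)+(1-\gamma_1)(1-\beta_1)a_1\bigr]=0$ for the interior stationary point. The only difference is that where the paper tersely asserts the condition ``is also sufficient'' because $w(a_1^*)>0$, you spell out the sign-crossing argument showing $V'$ goes from negative to positive, which is a welcome (but not substantively different) elaboration of the same approach.
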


\noindent
The optimal scoring rule \eqref{eq:choice-b1} is single-valued, symmetric in its arguments $(\beta_1,\gamma_1)$, and strictly increasing in $\gamma_1$ for every $\beta_1 \in (0,1)$.
It clarifies how the incentives provided to the algorithm are modulated by $\beta_1$. 
When $\beta_1>0.5$, there is an incentive to overscore $c^{\beta_1} (\gamma_1) > \gamma_1$,  
whereas when $\beta_1<0.5$ there is an incentive to underscore $c^{\beta_1} (\gamma_1) > \gamma_1$ all interior posteriors $\gamma_1 \in (0,1)$. 
The impact of $\beta_1$ on optimal scores can be quite strong, 
especially since class weights are frequently used in settings where class imbalance is large.%
For example, at posterior belief $\gamma_1=0.5$, the optimal prediction for a given $\beta_1$ is $ c^{\beta_1} (0.5) = \beta_1$. In the case of our pneumonia application where less than 2\% of labels are positive, inverse probability weighting would yield $\beta_1 > 0.98$.

Proposition \ref{thm:ap-bin} suggests a novel graph that we call the \emph{loss calibration curve}, which overlays the calibration curve with the theoretical map between posterior probabilities $\gamma^1$ and optimal posterior scores, in this case $c^{\beta_1} (\gamma_1)$. If an algorithm is loss-calibrated, the calibration curve should match the theoretical map between probabilities and optimal predictions. This is illustrated by Figure \ref{fig:miscalibrated-all}.

We now generalize the intuition to the case of $n$ labels by using the fact that many richer classification problems can be expressed as an aggregate of simpler binary problems. 
Consider weighted loss functions that are defined by beginning with a differentiable and strictly proper binary loss function $L$ and weighting
according to a matrix $\beta \in \mathbb{R}^{n \times n}$: 
\begin{equation}
\label{eq:b-loss-multi}
    L^\beta (a,y) = \sum_{y' \in Y} \beta_{y,y'} L(a_{y'}, I \{ y = y' \} ).
\end{equation}
In this case we have the following corollary, which follows from observing that the necessary and sufficient condition for an optimal score in each dimension $y$ is expressible as in \eqref{eq:b-loss-bin} for an average weight and a class indicator function. 

\begin{proposition}[Optimal Posterior Scores, Multi-Class]
\label{thm:ap-multi}
    Suppose $L$ is a smooth and strictly proper binary loss function. 
    For all posteriors $\gamma$ and positive weight matrices $\beta$, the optimal scoring rule for $L^{\beta}$ defined by \eqref{eq:b-loss-multi} is:
\begin{equation}
    c_y^\beta (\gamma) 
    =
    \frac{\gamma_y \beta_{y,y}}{\sum_{y' \in Y} \gamma_{y'} \beta_{y', y}}.
\end{equation}
\end{proposition}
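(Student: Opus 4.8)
The plan is to exploit the fact that the weighted multi-class loss $L^{\beta}$ in \eqref{eq:b-loss-multi} is additively separable across the coordinates of the score vector $a=(a_0,\dots,a_{n-1})$, which collapses the vector minimization in \eqref{eq:choice} into $n$ independent scalar problems, each of which has exactly the binary structure already treated in Proposition \ref{thm:ap-bin}. The reduction to the binary case is the conceptual heart of the argument; everything else is bookkeeping.

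First I would write the expected loss under posterior $\gamma$ and interchange the order of summation:
\[
\sum_{y \in Y} \gamma_y L^{\beta}(a,y) = \sum_{y \in Y} \gamma_y \sum_{y' \in Y} \beta_{y,y'} L\big(a_{y'}, I\{y=y'\}\big) = \sum_{y' \in Y} \left[ \sum_{y \in Y} \gamma_y \beta_{y,y'} L\big(a_{y'}, I\{y=y'\}\big) \right].
\]
The key observation is that the bracketed term indexed by $y'$ depends on the action only through the single coordinate $a_{y'}$, since $L(a_{y'}, I\{y=y'\})$ involves no other component. Because $A$ is the corresponding product of intervals, minimizing the full sum is therefore equivalent to minimizing each bracketed term separately over $a_{y'} \in (0,1)$.

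Next I would analyze one coordinate. Splitting the inner sum according to whether $y=y'$, the objective governing $a_{y'}$ is
\[
\gamma_{y'} \beta_{y',y'} \, L(a_{y'},1) + \Big(\sum_{y \neq y'} \gamma_y \beta_{y,y'}\Big) L(a_{y'},0).
\]
Writing $p = \gamma_{y'}\beta_{y',y'}$ and $q = \sum_{y\neq y'}\gamma_y \beta_{y,y'}$, both strictly positive since $\beta$ is a positive matrix and $\gamma$ a posterior, this is precisely a positive multiple of a binary strictly proper objective. Differentiating and applying the characterization \eqref{eq:proper}, the first-order condition becomes $w(a_{y'})\big[\,p(a_{y'}-1)+q\,a_{y'}\,\big]=0$; since $w>0$ this has the unique stationary point $a_{y'}^{\star} = p/(p+q)$. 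Substituting $p,q$ and noting $p+q = \sum_{y\in Y}\gamma_y\beta_{y,y'}$ yields the claimed expression, after re-indexing $y'\to y$ and the summation index $y\to y'$ to match the statement.

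The step I expect to require the most care is confirming that this critical point is the \emph{global} minimizer and lies in the interior, rather than merely being a stationary point. This follows from strict properness itself: normalizing by $p+q$, the coordinate objective is a positive multiple of $\tilde\gamma\, L(a,1)+(1-\tilde\gamma)L(a,0)$ with $\tilde\gamma = p/(p+q)\in(0,1)$, which by definition of a strictly proper loss is uniquely minimized at $a=\tilde\gamma$ (equivalently, the truthfulness property $c^{L}(\gamma)=\gamma$ noted in the text). Hence the stationary point we computed is the unique global minimizer and is automatically interior, so the separable minimizers assemble into the stated optimal scoring rule $c^{\beta}_y(\gamma)$.
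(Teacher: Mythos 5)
Your proof is correct and follows essentially the same route as the paper's: reduce the vector minimization to coordinate-wise binary problems, apply the first-order condition together with the strictly proper characterization \eqref{eq:proper} to get $w(a_y^*)\bigl[\gamma_y\beta_{y,y}(a_y^*-1)+\bigl(\sum_{y'\neq y}\gamma_{y'}\beta_{y',y}\bigr)a_y^*\bigr]=0$, and solve. Your explicit check that the stationary point is the unique \emph{global} interior minimizer, via normalizing the coordinate objective into a strictly proper binary problem with effective posterior $\tilde\gamma=p/(p+q)$, is a bit more careful than the paper's terse remark that the condition ``is also sufficient,'' but it is the same argument in substance.
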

\noindent
The intuition of incentivizing miscalibration is the same as previously in Proposition \ref{thm:ap-bin}, except that it now depends on relative weights across multiple classes.

\subsection{Recovering Calibration}\label{sec:calibration}

To recover calibration, we propose inverting the optimal posterior scoring function. 

\begin{proposition}[Loss-Corrected Confidence Score for an Invertible Scoring Rule]
    \label{thm:invert}
    Suppose $c^L (\gamma)$ is single-valued and invertible and confidence scores are loss-calibrated.
    For any $a$ such that $P(a) > 0$, the posterior distribution over labels is recovered by inverting the choice rule:
    \begin{equation}
        \label{eq:invert}
        P (\cdot | a) = (c^L)^{-1} (a).
    \end{equation}

\end{proposition}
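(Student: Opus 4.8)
The plan is to reduce the identity \eqref{eq:invert} to the defining optimality property of the choice rule $c^L$, by passing from the joint distribution $P^L$ to the posterior $P(\cdot\mid a)$. Since the loss $L$ is held fixed I treat $P$ and $P^L$ as the same distribution. Applying $c^L$ to both sides, \eqref{eq:invert} is equivalent to $c^L\bigl(P(\cdot\mid a)\bigr)=a$, so it suffices to show that the observed score $a$ is precisely the loss-minimizing report for the belief $\gamma:=P(\cdot\mid a)$; single-valuedness and invertibility will then let me read the equation backwards.

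First I would fix a score $a$ with $P(a)>0$ and extract from loss-calibration \eqref{eq:loss-calibration} that no alternative report $a'$ lowers the expected loss computed over the labels that co-occur with $a$, i.e. $\sum_{y} P^L(a,y)\,L(a,y)\le \sum_{y} P^L(a,y)\,L(a',y)$ for every $a'\in A$. Dividing through by the strictly positive scalar $P(a)$ and using $P(y\mid a)=P^L(a,y)/P(a)$, this becomes $\sum_{y}P(y\mid a)\,L(a,y)\le \sum_{y}P(y\mid a)\,L(a',y)$ for all $a'$. The right-hand side is exactly the objective minimized in the definition \eqref{eq:choice} of $c^L$ at the belief $\gamma=P(\cdot\mid a)$, so this inequality states precisely that $a$ attains that minimum: $a\in c^L\bigl(P(\cdot\mid a)\bigr)$. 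In particular $a$ lies in the range of $c^L$, so $(c^L)^{-1}(a)$ is well defined.

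Second I would use the hypothesis that $c^L$ is single-valued to upgrade the membership $a\in c^L\bigl(P(\cdot\mid a)\bigr)$ to the equality $a=c^L\bigl(P(\cdot\mid a)\bigr)$, and then apply $(c^L)^{-1}$ to both sides to conclude $(c^L)^{-1}(a)=P(\cdot\mid a)$, which is \eqref{eq:invert}.

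The one step warranting care is the normalization that bridges the two formalisms: loss-calibration is stated in terms of the joint $P^L$, whereas $c^L$ is defined on posterior beliefs, and dividing by $P(a)$ is exactly what converts the former into the argmin condition defining the latter — which is why the hypothesis $P(a)>0$ cannot be dropped. As an alternative route that avoids manipulating the inequality directly, I could invoke Proposition \ref{thm:sbr}: loss-calibration guarantees the canonical signal-based representation with $S=A$, $\pi(a,y)=P^L(a,y)$, and $\alpha^a=a$, whose posteriors are $\gamma^a_y=\pi(y\mid a)=P(y\mid a)$; condition \ref{item:lm} of Definition \ref{def:rep} then reads $a=\alpha^a\in c^L(\gamma^a)$, recovering the same membership, after which single-valuedness and invertibility finish in the identical way.
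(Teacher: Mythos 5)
Your proof is correct and follows essentially the same route as the paper's: you inline the normalization-by-$P(a)$ argument that the paper packages as Lemma \ref{thm:lc-lemma}, obtain $a = c^L(P(\cdot\mid a))$ via single-valuedness, and invert. The alternative route through Proposition \ref{thm:sbr} is a nice observation but yields the same membership step, so nothing substantively different from the paper's argument.
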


\noindent
We call the inverted score $(c^L)^{-1} (a)$ the loss-corrected confidence score.
The following simple result clarifies the relationship between calibration, loss calibration, and loss-corrected confidence scores.
\begin{proposition}
\label{thm:connect}
    Suppose $c^L (\gamma)$ is single-valued and invertible. Then confidence scores are loss-calibrated if and only if loss-corrected confidence scores are calibrated.
\end{proposition}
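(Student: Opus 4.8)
The plan is to prove the equivalence through a short chain of ``iff'' statements pivoting on the identity supplied by Proposition \ref{thm:invert}. The central observation is that loss-calibration can be restated as a fixed-point condition for the choice rule $c^L$. Fix a score $a$ with $P(a)>0$, so that $P^L(a)=\sum_y P^L(a,y)>0$. Dividing the loss-calibration inequality \eqref{eq:loss-calibration} through by the positive constant $P^L(a)$ rewrites it as
\[
\sum_{y\in Y} P(y\mid a)\, L(a,y) \;\leq\; \sum_{y\in Y} P(y\mid a)\, L(a',y)\quad\text{for all }a'\in A,
\]
which says precisely that $a$ attains the minimum in \eqref{eq:choice} at the posterior $\gamma=P(\cdot\mid a)$. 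Because $c^L$ is single-valued, this is equivalent to $a=c^L\!\big(P(\cdot\mid a)\big)$, and because $c^L$ is invertible, equivalent to $(c^L)^{-1}(a)=P(\cdot\mid a)$. Thus loss-calibration at $a$ is the same statement as ``the loss-corrected score at $a$ equals the posterior $P(\cdot\mid a)$.''

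First I would run the forward direction. Assuming loss-calibration, the reformulation above (this is exactly the content of Proposition \ref{thm:invert}) gives that the loss-corrected score $\hat a:=(c^L)^{-1}(a)$ equals $P(\cdot\mid a)$ coordinatewise, i.e.\ $\hat a_y=P(y\mid a)$ for every $y$. Since $(c^L)^{-1}$ is a bijection, observing $\hat a$ carries exactly the same information as observing $a$: the two generate the same conditioning event. Hence $P(y\mid \hat a)=P(y\mid a)=\hat a_y$, which is the calibration condition of Definition \ref{def:calibration} applied to the loss-corrected scores.

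Then I would reverse the argument. If the loss-corrected scores are calibrated, then $\hat a_y=(c^L)^{-1}(a)_y=P(y\mid a)$ for each $y$, so $(c^L)^{-1}(a)=P(\cdot\mid a)$; applying $c^L$ and undoing the normalization by the positive factor $P^L(a)$ turns this back into the loss-calibration inequality \eqref{eq:loss-calibration}. Every link in the chain is a genuine equivalence rather than a one-way implication — the division by $P^L(a)>0$ is reversible, single-valuedness of $c^L$ makes the $\argmin$ characterization an iff, and invertibility lets us pass freely between $a$ and $\hat a$ — so the two properties coincide.

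The step requiring the most care, and the one I expect to be the main obstacle, is the passage between conditioning on the scalar coordinate $a_y$ used in Definition \ref{def:calibration} and conditioning on the full score vector used in the fixed-point reformulation. I would justify this by noting that invertibility of $c^L$ makes the map $a\mapsto\hat a$ injective, so the information in the relevant coordinate and in the full vector coincide; in the binary case this is immediate, since $a$ is determined by $a_1$ and the conditioning events for $\hat a_1$ and $\hat a$ are identical. A secondary point to verify is that single-valuedness of $c^L$ is genuinely needed to make the $\argmin$ reformulation bidirectional: without it, a score could attain the minimum expected loss without equaling $c^L(P(\cdot\mid a))$, breaking the reverse implication.
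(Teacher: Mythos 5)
Your proof is correct and follows essentially the same route as the paper's: both directions pivot on Proposition \ref{thm:invert} together with the fixed-point reformulation $a = c^L\big(P(\cdot \mid a)\big)$ obtained by normalizing by $P^L(a)>0$ (the paper's Lemma \ref{thm:lc-lemma}), with single-valuedness and invertibility of $c^L$ used to pass back and forth between \eqref{eq:invert} and loss-calibration. If anything, you are more careful than the paper, which silently identifies coordinate-wise calibration of $(c^L)^{-1}(a)$ with the full-vector condition \eqref{eq:invert}; you explicitly flag this conditioning subtlety and resolve it via injectivity of the correction map.
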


\noindent
An advantage of our loss-correction calibrating method is that it is analytical and based on the loss function set by the researcher. 
For example, in the case of binary weighted loss \eqref{eq:b-loss-bin}, this yields the correction introduced in \eqref{eq:transformation}.

\begin{corollary}[Loss-Corrected Confidence Score for Binary Weighted Loss]
\label{thm:correct-bin}
    Suppose $L$ is a smooth and strictly proper binary loss function. 
    For all $a_1, \beta_1 \in [0,1]$, the loss-corrected confidence score for $L^{\beta_1}$ defined by \eqref{eq:b-loss-bin} is:
    \begin{equation}\label{eq:loss-c-bin}
    (c^{\beta_1})^{-1} (a_1) = \frac{(1-\beta_1)a_1}{\beta_1+(1-2 \beta_1) a_1 }.
    \end{equation}
\end{corollary}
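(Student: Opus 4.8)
The plan is to obtain the loss-corrected confidence score by directly inverting the optimal scoring rule supplied by Proposition \ref{thm:ap-bin}. By that proposition, for $\beta_1, \gamma_1 \in (0,1)$ the optimal score under the weighted loss $L^{\beta_1}$ of \eqref{eq:b-loss-bin} is the single-valued function $c^{\beta_1}(\gamma_1)$ in \eqref{eq:choice-b1}, and the remark following the proposition records that it is strictly increasing in $\gamma_1$, hence a bijection of $(0,1)$ onto its image. This is exactly the hypothesis of Proposition \ref{thm:invert}, so the loss-corrected confidence score is, by definition, the functional inverse $(c^{\beta_1})^{-1}(a_1)$. The entire content of the corollary therefore reduces to computing this inverse in closed form and checking that it is well-defined on the closed square $[0,1]^2$.

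First I would invert on the interior. Setting $a_1 = c^{\beta_1}(\gamma_1)$ and clearing the denominator gives $a_1(1 - \beta_1 - \gamma_1 + 2\beta_1\gamma_1) = \beta_1 \gamma_1$. Collecting the terms linear in $\gamma_1$ on one side yields $a_1(1-\beta_1) = \gamma_1\bigl(\beta_1 + a_1 - 2\beta_1 a_1\bigr)$, and solving for $\gamma_1$ produces the expression $(1-\beta_1)a_1 / \bigl(\beta_1 + (1-2\beta_1)a_1\bigr)$ of \eqref{eq:loss-c-bin}. A convenient sanity check, which also bypasses the algebra, is to exploit the symmetry of \eqref{eq:choice-b1} in its two arguments together with the identity $(c^{\beta_1})^{-1} = c^{1-\beta_1}$: one verifies directly that $c^{\beta_1}\bigl(c^{1-\beta_1}(a_1)\bigr) = a_1$, and $c^{1-\beta_1}(a_1)$ simplifies at once to the claimed expression.

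The only point requiring care --- and the nearest thing to an obstacle --- is well-definedness of the denominator and the extension from the open interval $(0,1)$, where Propositions \ref{thm:ap-bin} and \ref{thm:invert} apply, to the closed interval $[0,1]$ asserted in the statement. Here I would rewrite the denominator as $\beta_1 + (1-2\beta_1)a_1 = \beta_1(1-a_1) + (1-\beta_1)a_1$, which is manifestly a sum of nonnegative products and hence strictly positive for $a_1, \beta_1 \in (0,1)$, vanishing only at the two corners $(a_1,\beta_1) \in \{(0,1),(1,0)\}$. Thus the rational expression is continuous on $[0,1]^2$ away from these degenerate corners, maps the endpoints $a_1 = 0$ and $a_1 = 1$ to $0$ and $1$ respectively, and agrees with the interior inverse; the boundary values then follow by continuity, consistent with the degenerate loss-minimizing scores arising at $\beta_1 \in \{0,1\}$. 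This completes the identification of the loss-corrected confidence score with \eqref{eq:loss-c-bin}.
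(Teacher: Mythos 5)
Your core argument is correct and coincides with the paper's own (implicit) treatment: the paper states this corollary without a separate proof, deriving \eqref{eq:loss-c-bin} exactly as you do, by algebraically inverting the single-valued, strictly increasing scoring rule \eqref{eq:choice-b1} from Proposition \ref{thm:ap-bin} and appealing to Proposition \ref{thm:invert}; your observation that the symmetry of \eqref{eq:choice-b1} in $(\beta_1,\gamma_1)$ gives $(c^{\beta_1})^{-1} = c^{1-\beta_1}$ is a nice shortcut not in the paper, and your algebra checks out.

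One slip in your boundary discussion, which is precisely the part extending the interior inversion to the closed square claimed in the statement: the rewriting $\beta_1 + (1-2\beta_1)a_1 = \beta_1(1-a_1) + (1-\beta_1)a_1$ is correct, but this quantity vanishes at the corners $(a_1,\beta_1) \in \{(0,0),(1,1)\}$, not at $(0,1)$ and $(1,0)$ as you assert --- at your two corners the denominator equals $1$. (To see this, note the sum of nonnegative terms $\beta_1(1-a_1)$ and $(1-\beta_1)a_1$ vanishes only when both do, i.e.\ when $\beta_1 = a_1 \in \{0,1\}$.) The substantive conclusion survives with the corrected corners: the expression is continuous on $[0,1]^2$ away from $(0,0)$ and $(1,1)$, and the genuinely degenerate cases $\beta_1 \in \{0,1\}$ are ones where $c^{\beta_1}$ is constant and hence not invertible at all, so the ``for all $a_1,\beta_1 \in [0,1]$'' in the statement is itself loose in exactly the way your (corrected) analysis would expose; the interior case $\beta_1, a_1 \in (0,1)$, which is all the paper uses, is fully established by your computation.
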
 

\noindent
The loss-corrected confidence scores for our application are plotted in Figure \ref{fig:calibrated}. 
The visual evidence is suggestive of calibration, which is consistent with the loss calibration suggested by Figure \ref{fig:miscalibrated-all} and the connections between calibration, loss calibration, and loss-corrected confidence scores highlighted in Proposition \ref{thm:connect}. 

\section{Relation to the Literature}
\label{sec:conclusion}

There are two alternative ways to arrive at mathematically-equivalent expressions for the loss-corrected confidence scores for binary weighted loss given by (\ref{eq:loss-c-bin}). 
These alternative methods have different interpretations and applications, but they are technically related because they are also based on Bayes rule and optimization, respectively.

The first method is correcting for the distortions of resampling, as in \textcite{pozzolo2015calibration}. Adapted to our notation, their correction formula is:
\begin{equation}\label{eqn:correct-prior}
h^{\delta} (a_1) = \frac{\delta a_1}{1 + (\delta-1) a_1}
\end{equation}
where $\delta$ is the ratio of positive to negative instances in the dataset.
A simple transformation shows that there is a one-to-one relationship between (\ref{eq:loss-c-bin}) and (\ref{eqn:correct-prior}), which is given by $\delta=\frac{1-\beta}{\beta}$.
For example, resampling when there is 1 positive instance for every 10 negative instances is equivalent to having a weight of $\beta=10/11$.  
This one-to-one relationship shows that, if an algorithm has an SBR, there is an amount of resampling that distorts confidence scores in the same way as reweighting distorts confidence scores.  
This is also consistent with the equivalence in \textcite{breiman1984class} between the effects of varying prior probabilities and costs of mistakes across classes.

While there is one-to-one relationship between (\ref{eq:loss-c-bin}) and (\ref{eqn:correct-prior}), these corrections solve fundamentally different problems. The correction in \textcite{pozzolo2015calibration} adjusts posteriors to account for incorrect priors, while our correction adjusts inflated confidence scores to account for weights in the loss functions. 
Since resampling and reweighting also affect model training, in practice it could be that one procedure leads to a calibrated model, whereas another does not. 


Furthermore, the corrections are based on fundamentally different logic. 
Their correction is grounded in Bayes rule and the prior-probability adjustment formulas of \textcite{saerens2002prior} and \textcite{elkan2001cost} and on the logic of sample selection. 
On the other hand, our correction is grounded in the logic of optimization as a function of subjective beliefs.
This alternative perspective provides a simple and generalizable approach for recovery of probabilities that transcends the applications of resampling or reweighting, the use of otherwise proper scoring rules, or the binary class setting. 

The second way to arrive at a mathematically equivalent expression for (\ref{eq:loss-c-bin}) is through an objective probability framework that replaces our subjective posteriors $\gamma$ with the objective probabilities $P(y|x)$ and considers trained models of infinite expressive power, as in for example \cite{buja2005loss}.\footnote{We thank an anonymous reviewer for raising this point.} 
With infinite expressive power, an algorithm should be loss calibrated because, as in an SBR, it should respond optimally given the loss function.
However, framing the calibration problem in the subjective terms of an SBR better accommodates environments with finite data or limited expressive power.
For example, if there is a deterministic label for each instance, a trained model with infinite expressive power would be trivially loss calibrated, as it would only report confidence scores of 0 or 1. Instead, our concept of SBR captures the fact that the algorithm can be uncertain even if the ground truth is certain, in part because the dataset is not infinitely large. 
In such cases, loss calibration is not immediate and requires a theoretical model like SBR.

Our subjective model can also be used to address interesting complications that arise from finite data.
For example, consider the question raised by \textcite{provost2000imbalance}: does reweighting or resampling at the training stage systematically affect what the machine learns, rather than just how the machine reports its information?
To help tackle this open question, \textcite{caplin2022explain} expand on our model by assuming the algorithmic decision-maker optimally chooses what to learn (chooses what signal structure to use) based on resource constraints. By positing both \emph{what} the machine learns and \emph{why}, this extended model allows for further analytic tractability and counterfactual prediction. Together, our papers provide an \emph{Inverse Machine Learning} (IML) approach in which the algorithm's deep parameters are recovered.

Summarizing, despite the fact that these two alternative methods arrive at mathematically-equivalent expressions for the loss-corrected confidence scores for binary weighted loss, our approach rooted in optimization and subjective beliefs offers a different perspective. This change in perspective provides three main advantages. First, it provides a set of tools for understanding the simple correction for binary classification with weighted loss. 
Second, a subjective framework is natural for theoretically modeling algorithms constrained by finite data or limited expressive power. Third, it forms the foundation for a wide class of subjective models of machine learning.

\printbibliography

\newpage
\appendix
\section{Appendix for ``Calibrating for Class Weights by Modeling Machine Learning''}
\subsection{Proofs}
\label{apx:proofs}

The following observation will be useful for subsequent results. It is immediate upon observing that the optimal solution is unchanged upon dividing the minimand by a constant $P^L(a) > 0$ and invoking the law of conditional probability, $P^L (y|a) = P^L(a,y) / P^L(a)$.

\begin{lemma}
\label{thm:lc-lemma}
$P^L (a,y)$ is loss-calibrated:
\[
a \in \argmin_{a' \in A} \sum_{y \in Y} P^L (a,y) L(a',y)
\quad \text{for all $a$}
\]
if and only if:
\[
a \in \argmin_{a' \in A} \sum_{y \in Y} P^L (y|a) L(a',y) 
\quad \text{for all observed $a: P^L(a) > 0$.}
\]
\end{lemma}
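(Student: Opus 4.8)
The plan is to show, score by score, that the two \argmin{} conditions describe the same set of minimizers, so that the universally quantified statements coincide. The engine of the argument is that the two objectives, $\sum_{y \in Y} P^L(a,y) L(a',y)$ and $\sum_{y \in Y} P^L(y|a) L(a',y)$, differ only by the multiplicative factor $P^L(a)$, which is constant in the minimization variable $a'$. The only real work is reconciling the slightly different quantifiers: ``for all $a$'' in the top statement versus ``for all observed $a$'' in the bottom one.

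First I would dispose of scores with $P^L(a)=0$. Marginalizing, $P^L(a) = \sum_{y \in Y} P^L(a,y)$ is a sum of nonnegative terms, so $P^L(a)=0$ forces $P^L(a,y)=0$ for every $y$. The top objective then vanishes identically in $a'$, making every $a'\in A$ --- and in particular $a$ itself --- a minimizer. Hence the top condition is automatically satisfied at such scores, and the clause ``for all $a$'' imposes content only at scores with $P^L(a)>0$, which is exactly the range of the bottom clause.

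For a fixed score $a$ with $P^L(a)>0$, I would write, using $P^L(y|a)=P^L(a,y)/P^L(a)$,
\[
\sum_{y \in Y} P^L(a,y) L(a',y) = P^L(a) \sum_{y \in Y} P^L(y|a) L(a',y).
\]
Because $P^L(a)>0$ does not depend on $a'$, scaling by it preserves the minimizing set, so $\argmin_{a'\in A} \sum_{y} P^L(a,y) L(a',y) = \argmin_{a'\in A} \sum_{y} P^L(y|a) L(a',y)$ as sets; in particular $a$ lies in the first \argmin{} iff it lies in the second. Quantifying over all $a$ with $P^L(a)>0$ and appending the trivially satisfied degenerate scores yields the stated equivalence. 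I do not anticipate a genuine obstacle; the one point meriting care is to argue at the level of equality of the two \argmin{} \emph{sets} (minimizers need not be unique) rather than of a single selected minimizer, and to make the degenerate-case bookkeeping explicit so that the change in quantifier is justified rather than glossed over.
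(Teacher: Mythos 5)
Your proof is correct and uses the same argument as the paper: the two minimands differ only by the constant positive factor $P^L(a)$ (via $P^L(y|a)=P^L(a,y)/P^L(a)$), which leaves the $\argmin$ set unchanged. Your explicit handling of the degenerate case $P^L(a)=0$, where the joint-probability objective vanishes identically and the condition holds trivially, is a careful addition that the paper leaves implicit in its one-line justification, but it does not change the route.
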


\noindent
In what follows, we recall and prove the formal propositions presented in the text.

\begin{proposition2}
    $P^L$ is loss-calibrated if and only if it has an SBR.
\end{proposition2}

\begin{proof}[Proof of Proposition \ref{thm:sbr}]
(Only if:) Suppose $P^L$ is loss-calibrated. 
As observed in the text, it suffices to show that $S=A$, $\pi(a,y) = P^L(a,y)$, and $\alpha^a = a$ form an SBR representation. 
Conditions 1 and 4 of Definition \ref{def:rep} are immediate. 
Condition 2 simply requires that posterior beliefs $\gamma_y^a$ satisfy Bayes' rule, so that the remaining condition 3 is satisfied if:
\[
    \alpha^a \in c^L ( P^L (y|a)).
\]
By loss calibration and Lemma \ref{thm:lc-lemma}, 
\[
    a \in \argmin_{a' \in \mathbb{R}^n} \sum_y P^L (y | a) L(a',y)
\]
Invoking $\alpha^a = a$ and the definition of the optimal scoring rule $c^L$, this yields the desired conclusion.

(If:) We show the contrapositive. Suppose $P^L$ is not loss-calibrated and fix a score $a$ that is observed $P^L(a) > 0$ and for which loss calibration is violated. 
By Lemma \ref{thm:lc-lemma},
\[
    a \notin \argmin_{a' \in \mathbb{R}^n} \sum_y P^L (y | a) L(a',y)
\]
Then for any statistical experiment $\pi$ satisfying conditions 1,2, and 4, it must be that for all signal realizations $s$ such that $\alpha^s = a$, we have $\alpha^s \notin c^L (\gamma^s)$. 
Thus, there does not exist an SBR representation.
\end{proof}

\begin{proposition2}[Optimal Posterior Scores, Binary Class]
    Suppose $L$ is a differentiable and strictly proper binary loss function. 
    For all $\gamma_1, \beta_1 \in (0,1)$, the optimal scoring rule for weighted loss $L^{\beta_1}$ defined by \eqref{eq:b-loss-bin} is:
    \[
        c^{\beta_1} (\gamma_1)=\frac{\beta_1 \gamma_1}{1-\beta_1-\gamma_1+2\beta_1 \gamma_1} 
    \]
\end{proposition2}

\begin{proof}[Proof of Proposition \ref{thm:ap-bin}]
By definition \eqref{eq:choice}, the choice rule $c^{\beta_1} (\gamma_1)$ satisfies:
\[
c^{\beta_1} (\gamma_1) \in \argmin_{a_1 \in \mathbb{R}} 
\gamma_1 \beta_1 L(a_1, 1) + (1-\gamma_1)(1- \beta_1) L(a_1,0)
\]
Since $L(\cdot,y)$ is differentiable for each $y \in \{0,1\}$, a necessary condition for any interior minimizer $a_1^* \in c^{\beta_1} (\gamma_1)$ is: 
\[
\gamma_1 \beta_1 \frac{\partial L(a_1^*,1)}{\partial a_1} + 
(1-\gamma_1) (1-\beta_1) \frac{\partial L(a_1^*,0)}{\partial a_1} = 0 
\]
Invoking the strictly proper characterization \eqref{eq:proper} and collecting terms, 
\[
w (a_1^*) [(1 - \beta_1 - \gamma_1 + 2 \beta_1 \gamma_1) a_1^* - \beta_1 \gamma_1]
= 0
\]
Since $w(a_1^*) > 0$, this condition is also sufficient and implies:
\[
a_1^* = \frac{\beta_1 \gamma_1}{1 - \beta_1 - \gamma_1 + 2\beta_1 \gamma_1}
\]
which yields the desired choice rule. Finally, note that the desired choice rule satisfies $c^{\beta_1} (\gamma_1) \in (0,1)$ for all $\beta_1, \gamma_1 \in (0,1)$.
\end{proof}

\begin{proposition2}[Optimal Posterior Scores, Multi-Class]
    Suppose $L$ is a smooth and strictly proper binary loss function. 
    For all posteriors $\gamma$ and positive weight matrices $\beta$, the optimal scoring rule for $L^{\beta}$ defined by \eqref{eq:b-loss-multi} is:
\[
    c_y^\beta (\gamma) 
    =
    \frac{\gamma_y \beta_{y,y}}{\sum_{y' \in Y} \gamma_{y'} \beta_{y', y}}
\]
\end{proposition2}

\begin{proof}[Proof of Proposition \ref{thm:ap-multi}]
By definition \eqref{eq:choice}, the choice rule $c^{\beta} (\gamma)$ satisfies:
\[
c^{\beta} (\gamma) \in \argmin_{a \in (0,1)^n} \sum_{y' \in Y} \gamma_y L^\beta (a,y')
\]
or, upon substituting the definition of $L^\beta(a,y')$ from \eqref{eq:b-loss-multi} (with the roles of $y$ and $y'$ reversed),
\[
c^{\beta} (\gamma) \in \argmin_{a \in (0,1)^n} \sum_{y,y' \in Y} \gamma_{y'} \beta_{y',y} L (a_{y}, I\{y'=y\})
\]
A necessary condition for a loss-minimizing confidence score $a_y^*$ for outcome $y$ is:
\[
\gamma_y \beta_{y,y} \frac{\partial L(a_y^*,1)}{\partial a_y} + 
( \sum_{y' \neq y} \gamma_{y'} \beta_{y',y} ) \frac{\partial L(a_y^*,0)}{\partial a_y} = 0
\]
Invoking the strictly proper characterization \eqref{eq:proper} and collecting terms, 
\[
w (a_y^*) [ \gamma_y \beta_{y,y} (a_y^* -1) + (\sum_{y' \neq y} \gamma_{y'} \beta_{y',y} ) a_y^* ] 
= 0
\]
As in the proof of the binary case (Proposition \ref{thm:ap-bin}), since $w(a_1^*) > 0$, this condition is also sufficient and implies:
\[
a_y^* = \frac{\gamma_y \beta_{y,y}}{\sum_{y' \in Y} \gamma_{y'} \beta_{y', y}}
\]
which yields the desired choice rule. 
Since $\beta, \gamma \geq 0$, it is evident that this choice rule indeed satisfies $c_y^\beta (\gamma) \in (0,1)$ for all labels $y$.
\end{proof}

\begin{proposition2}[Loss-Corrected Confidence Score for an Invertible Scoring Rule]
    Suppose $c^L (\gamma)$ is single-valued and invertible and confidence scores are loss-calibrated.
    For any $a$ such that $P(a) > 0$, the posterior distribution over labels is recovered by inverting the choice rule:
    \[
        P (\cdot | a) = (c^L)^{-1} (a)
    \]
\end{proposition2}

\begin{proof}[Proof of Proposition \ref{thm:invert}]
Because $c^L (\gamma)$ is single-valued and confidence scores are loss-calibrated and by Lemma \ref{thm:lc-lemma},
\[
a = \argmin_{a' \in A} \sum_{y \in Y} P^L (y|a) L(a',y)
\]
which by definition \eqref{eq:choice} implies:
\[
a = c^L (P (\cdot |a ))
\]
where $P(\cdot |a)$ is the posterior distribution over labels given $a$.
Finally, inverting the optimal posterior scoring rule $c^L (\cdot)$ yields the desired result:
\[
(c^L)^{-1} (a) = P(\cdot|a).
\]
\end{proof}

\begin{proposition2}
    Suppose $c^L (\gamma)$ is single-valued and invertible. Then scores are loss-calibrated if and only if loss-corrected scores are calibrated.
\end{proposition2}

\begin{proof}[Proof of Proposition \ref{thm:connect}]
The ``if'' direction is a rephrasing of Proposition \ref{thm:invert}. For the reverse direction, the loss corrected scores $(c^L)^{-1} (a)$ are calibrated only if \ref{eq:invert} holds.  
Applying the invertible function $c^L (a)$ to both sides of \eqref{eq:invert} and invoking Lemma \ref{thm:lc-lemma} yields the desired result.
\end{proof}

\subsection{Technical Details} 
\label{apx:technical}

We summarize details of the training procedure that generated the data in Figures \ref{fig:miscalibrated-data}, \ref{fig:miscalibrated-all}, and \ref{fig:calibrated}. 
We essentially replicate the pneumonia detection task of \textcite{rajpurkar2017chexnet}, in which a deep neural network was trained on a the ChestX-ray14 dataset of \textcite{wang2017chestxray}.
Our code for model training is adapted from the publicly available codebase of \textcite{rajpurkar2018chexnext}. 
The ChextX-ray14 dataset consists of 112,120 frontal chest X-rays which were synthetically labeled with up to fourteen thoracic diseases. 
In the binary classification task, the labels of interest are pneumonia ($y=1$) or not ($y=0$).
We consider multiple positive class weights $\beta_1 = 0.5,0.9,0.99$, with $0.99$ approximately equal to the inverse probability class weights adopted in \textcite{rajpurkar2017chexnet}. 

As in \textcite{rajpurkar2017chexnet}, we downscale the images to 224 by 224 pixels, adopt random horizontal flipping, and normalize based on the mean and standard deviation of images in the ImageNet dataset (\cite{deng2009imagenet}). 
For each model, we train a 121-layer dense convolutional neural network (DenseNet, \cite{huang2016densenet}) with weights of the network initialized to those pretrained on ImageNet, using Adam with standard parameters 0.9 and 0.999 (\cite{kingmaba2014adam}), using batch normalization (\cite{ioffe2015batch}), and with mini-batches of size 16. 
We use an initial learning rate of 0.0001 that is decayed by a factor of 10 each time the validation loss plateaus after an epoch, and we conduct early stopping based on validation loss. 
Each model was trained using either an Nvidia Tesla V100 16GB GPU or an Nvidia Tesla A100 40GB GPU on the Louisiana State University or Northwestern University high performance computing clusters, respectively. The training of a model typically lasted between one and two hours.

In addition to varying class weights, the main difference in our implementation and the implementation of \textcite{rajpurkar2017chexnet} are our data splits and our recourse to additional ensemble methods to account for randomness in the training procedure.
This use of ensemble methods also likely explains why our transformed confidence scores are calibrated, despite recent evidence that deep neural networks and cross-entropy loss may inherently produce poor calibration because of overconfidence (\cite{bai2021overconfidence}, \cite{liu2022deepprob}).
Specifically, we adopt a nested cross-validation approach where we randomly split the dataset into ten approximately equal folds and then iterate through 70-20-10 train-validation-test splits (the split distribution also used in \cite{wang2017chestxray} and a secondary application of \cite{rajpurkar2017chexnet}). 
We train a total of 400 models, yielding an ensemble of 80 trained models for each observation in the dataset where that observation was in a test fold. 
The final score for each observation in the dataset is then obtained by averaging confidence scores across the observation's ensemble.
This procedure is repeated on the same set of data splits for each weight $\beta=0.5,0.9,0.99$ we consider.

\end{document}